 \newcommand{\IGNORE}[1]{}
\renewcommand{\cite}{\citep}
\newcommand\D{\operatorname{Diag}}
 \DeclareMathOperator*{\argmin}{arg\,min}
\DeclareMathOperator{\rank}{Rank}
 \def\0{{\bf 0}}
\def\qed{\hfill\hbox{${\vcenter{\vbox{
    \hrule height 0.4pt\hbox{\vrule width 0.4pt height 6pt
    \kern5pt\vrule width 0.4pt}\hrule height 0.4pt}}}$}}
\def\tcr{\textcolor{red}}
\def\tcb{\textcolor{blue}}
\definecolor{myred}{rgb}{0.3,0.0,0.7}
\definecolor{dkg}{rgb}{0.1,0.7,0.2}
\definecolor{dkb}{rgb}{0.0,0.2,0.8}
\def\tcdkg{\textcolor{dkg}}
\definecolor{lpurple}{cmyk}{.05,0.18,0,0}
\def\Nc{{\cal N}}
\def\Ebb{{\mathbb E}}
\newcommand{\bprf}{\begin{myproof}}
\newcommand{\eprf}{\end{myproof}}
\newcommand{\bp}{\begin{psfrags}}
\newcommand{\ep}{\end{psfrags}}
\newcommand{\bl}{\begin{lemma}}
\newcommand{\el}{\end{lemma}}
\newcommand{\bt}{\begin{theorem}}
\newcommand{\et}{\end{theorem}}
\newcommand{\bc}{\begin{center}}
\newcommand{\ec}{\end{center}}
\newcommand{\bi}{\begin{itemize}}
\newcommand{\ei}{\end{itemize}}
\newcommand{\ben}{\begin{enumerate}}
\newcommand{\een}{\end{enumerate}}
\newcommand{\bd}{\begin{definition}}
\newcommand{\ed}{\end{definition}}
\def\beq{\begin{equation}}
\def\eeq{\end{equation}\noindent}
\def\beqn{\begin{eqnarray}}
\def\eeqn{\end{eqnarray} \noindent}
\def\beqnn{  \begin{eqnarray*}}
\def\eeqnn{\end{eqnarray*}  \noindent}
\def\bcase{  \begin{numcases}}
\def\ecase{\end{numcases}   \noindent}
\def\bsbcase{  \begin{subnumcases}}
\def\esbcase{\end{subnumcases}   \noindent}
\newtheorem{theorem}{Theorem}
\newtheorem{lemma}{Lemma}
\newtheorem{proposition}{Proposition}
\newtheorem{definition}{Definition}
\newtheorem{remark}{Remark}
\newenvironment{myproof}{\noindent{\bf Proof:} \hspace*{1em}}{
    \hspace*{\fill} $\Box$ }
\newenvironment{proof_of}[1]{\noindent {\bf Proof of #1: }}{\hspace*{\fill} $\Box$ }
\newcommand{\matplottc}[1]{               
        \unitlength .45truein
        \begin{center}
        \includegraphics{#1.ps}
        \end{picture}
        \end{center}
}
\def\psfancypar#1#2{\begingroup\def\par{\endgraf\endgroup\lineskiplimit=0pt}
               \setbox2=\hbox{\large\sc #2}
               \newdimen\tmpht \tmpht \ht2 \advance\tmpht by \baselineskip
               \font\hhuge=Times-Bold at \tmpht
               \setbox1=\hbox{{\hhuge #1}}
               \count7=\tmpht \count8=\ht1
               \divide\count8 by 1000 \divide\count7 by \count8
               \tmpht=.001\tmpht\multiply\tmpht by \count7
               \font\hhuge=Times-Bold at \tmpht
               \setbox1=\hbox{{\hhuge #1}}
               \noindent
                \hangindent1.05\wd1
               \hangafter=-2 {\hskip-\hangindent
               \lower1\ht1\hbox{\raise1.0\ht2\copy1}%
                \kern-0\wd1}\copy2\lineskiplimit=-1000pt}
\def\Kout{\setbox1=\hbox{\Huge\bf K}\hbox to
1.05\wd1{\hspace{.05\wd1}
\def\Sout{\setbox1=\hbox{\Huge\bf S}\hbox to 1.05\wd1{\hspace{.05\wd1}



\iclrfinalcopy
\usepackage{lipsum}

\title{Provable Methods for Training Neural Networks with Sparse Connectivity}


\author{
Hanie Sedghi \\
University of Southern California\\
Los Angeles, CA 90089 \\
\texttt{hsedghi@usc.edu}\\
\And
Anima Anandkumar \\
University of California \\
Irvine, CA 92697 \\
\texttt{a.anandkumar@uci.edu}\\
}

\begin{document}

\maketitle

\begin{abstract}
We provide  novel guaranteed approaches for training  feedforward neural networks with sparse connectivity. We leverage on the techniques  developed previously for learning linear   networks  and show that they can also be effectively adopted to  learn non-linear networks. We  operate on the moments involving label and the score function of the input, and show that their factorization provably yields the weight matrix of the first layer of a deep network under mild conditions. In practice, the output of our method can be employed as effective initializers for gradient descent.
\end{abstract}

\paragraph{Keywords: }Deep feedforward networks, sparse connectivity, $\ell_1$-optimization, Stein's lemma.

\section{Introduction}
The paradigm of deep learning has revolutionized our ability to perform challenging classification tasks in a variety of domains such as computer vision and speech. However, so far, a complete theoretical understanding of deep learning is lacking. Training deep-nets is a highly non-convex problem involving millions of variables, and   an exponential number of fixed points. Viewed naively, proving any guarantees   appears to be  intractable.  In this paper, on the contrary, we show that  guaranteed learning of a subset of parameters is possible under  mild conditions.

We propose a novel learning algorithm based on the  method-of-moments. The notion of using moments for learning distributions dates back to Pearson~\cite{Pearson94}.  This paradigm has seen a recent revival in machine learning  and has been applied   for unsupervised learning of a variety of latent variable models (see~\cite{AnandkumarEtal:tensor12} for a survey). The basic idea is to develop efficient  algorithms for factorizing moment matrices and tensors. When the underlying factors are sparse,  $\ell_1$-based convex optimization techniques  have been proposed before, and been   employed for learning dictionaries~\cite{Spielman-12},   topic models, and linear latent Bayesian networks~\cite{AnandkumarEtal:DAG12}.

In this paper, we employ the $\ell_1$-based optimization method  to learn deep-nets with sparse connectivity. However, so far, this method has theoretical guarantees only for  linear models.   We develop novel techniques to prove the correctness even for non-linear models. A key technique we use is  the Stein's lemma from statistics~\cite{stein1986approximate}. Taken together, we show how to effectively leverage algorithms based on method-of-moments to  train deep non-linear networks.


\subsection{Summary of Results}

We present a theoretical framework for analyzing when neural networks can be learnt efficiently. We demonstrate how the method-of-moments can yield useful information about the weights in a neural network, and also in some cases, even   recover them exactly. In practice, the output of our method can be used for   dimensionality reduction for back propagation, resulting in reduced computation.

We show that in a feedforward neural network, the relevant moment matrix to consider is the cross-moment  matrix between the label and the score function of the input data (i.e. the derivative of the log of the density function).
The classical Stein's result~\cite{stein1986approximate} states that this matrix yields the expected derivative of the label (as a function of the input). The Stein's result   is essentially obtained through integration by parts~\cite{nourdin2013integration}.

By employing the Stein's lemma, we show that the  row span of the moment matrix between the label and the input score function  corresponds to the span of the weight vectors in the first layer, under natural non-degeneracy conditions.
Thus, the singular value decomposition of this moment e matrix can be used as low rank approximation  of the first layer weight matrix during back propagation, when the number of neurons is less than the input dimensionality. Note that since the first layer typically has the most number of parameters (if a convolutional structure is not assumed),
having a low rank approximation results in significant improvement in  performance and computational requirements.

We then show that we can exactly recover the weight matrix of the first layer from the moment matrix, when the weights are sparse. It has been argued that sparse connectivity is a natural constraint which can lead to improved performance in practice~\cite{thom2013sparse}. We show that the weights can be correctly recovered using an efficient $\ell_1$ optimization approach. Such approaches have been earlier employed for linear models such as dictionary learning~\cite{Spielman-12} and topic modeling~\cite{AnandkumarEtal:DAG12}. Here, we establish that the method is also successful in learning non-linear networks, by alluding to Stein's lemma.

Thus, we show that  the  cross-moment matrix between the label and  the score function of the input contains useful information for training neural networks. This  result has an intriguing connection with~\cite{alain2012regularized}, where it is shown a denoising auto-encoder approximately learns the score function of the input. Our analysis here provides a theoretical explanation of why pre-training can lead to improved performance during back propagation: the interaction between the score function (learnt during pre-training) and the label during back propagation results in correctly identifying the span of the  weight vectors, and thus, it leads to improved performance.

The use of  score functions for improved classification performance is popular under the framework of Fisher kernels~\cite{jaakkola1999exploiting}. However, in~\cite{jaakkola1999exploiting}, Fisher kernel is defined as the derivative with respect to some model parameter, while here we consider the derivation with respect to the input and refer to it as score function. Note that if the Fisher kernel is with respect to a location parameter, these two notions are equivalent. Here, we show that considering the moment between the label and the  score function of the input can lead to guaranteed learning and improved classification.

Note that there are various efficient methods for computing the score function (in addition to the auto-encoder). For instance,~\citet{sasaki2014clustering} point out that  the score function can be estimated efficiently through non-parametric methods without the need to estimate the density function. In fact, the solution is closed form, and the hyper-parameters (such as the kernel bandwidth and the regularization parameter) can be tuned easily through cross validation. There are a number of score matching algorithms, where the goal is to find a good fit in terms of the score function, e.g~\cite{hyvarinen2005estimation, swersky2011autoencoders}.  We can employ them to obtain accurate estimations of the score functions.

Since we employ a method-of-moments approach, we assume that the label is generated by a feedforward neural network, to which the input data is fed. In addition, we make mild non-degeneracy assumptions on the weights and the derivatives of the activation functions. Such assumptions  make the learning problem tractable, whereas   the general learning problem is NP-hard. We expect that the output of our moment-based approach can provide effective initializers for the back propagation procedure. 


\subsection{Related Work}



In this paper, we show that the method-of-moments can yield low rank approximations for weights in the first layer.
Empirically,  low rank approximations of the weight matrices have been employed successfully to  improve the performance and for reducing computations~\cite{davis2013low}. Moreover, the notion of using moment matrices for   dimension reduction is popular in statistics, and the dimension reducing subspace  is termed as a central subspace~\cite{cook1998principal}.

We present a $\ell_1$ based convex optimization technique to learn the weights in the first layer, assuming they are sparse. Note that this is different from other convex approaches for learning feedforward neural network. For instance,~\citet{bengio2005convex} show via a boosting approach that learning neural networks is a   convex optimization problem  as long as the number of hidden units can be selected by the algorithm. However, typically, the neural network architecture is fixed, and in that case, the optimization is non-convex.

Our work is the first to show guaranteed learning of a feedforward neural network incorporating both the label and the input.~\citet{arora2013provable} considered the auto-encoder setting, where learning is unsupervised,  and showed how the weights can be learnt correctly under a set of conditions. They assume that the hidden layer can be decoded correctly using a ``Hebbian'' style rule, and they all have only binary states. We present a different approach for learning by using the moments between the label and the score function of the input.


\section{Moments of a Neural Network}

\subsection{Feedforward network with one hidden layer}
We first consider a feedforward network with one hidden layer. Subsequently, we discuss how much this can be extended.
 Let $y$ be the label vector generated from the neural network and $x$ be the feature vector. We assume $x$ has a well-behaved continuous probability distribution $p(x)$ such that the score function $\nabla_x \log p(x)$ exists. The network is depicted in Figure~\ref{fig:FF}. Let

\begin{align}
\label{eq:nn1}
\Ebb[y|h]=\sigma_2(A_2 h ),~~~ \Ebb[h|x]=\sigma_1(A_1x).
\end{align}


This setup is applicable to both multiclass and multilabel settings. For multiclass classification $\sigma_2$ is the softmax function and for multilabel classification $\sigma_2$ is a elementwise  sigmoid function. Recall that multilabel classification refers to the case where each instance can have more than one (binary) label~\citep{bishop2006pattern,tsoumakas2007multi}.

  \begin{figure}
\begin{center}
\begin{tikzpicture}
  [
    scale=1.1,
    observed/.style={circle,minimum size={width("$x_{d_x}$")+2pt},inner
sep=0mm,draw=violet,fill=lpurple,line width=.5mm},
    hidden/.style={circle,minimum size=0.6cm,inner sep=1mm,draw=dkg,line width=.5mm},
        func/.style={circle,minimum size=0.6cm,inner sep=1mm,draw=blue,dashed, line width=.5mm},
        vdots/.style={min, node distance=.5mm},
  ]
  \node [func,name=f1] at ($(-2,0)$) {$\tcb{\sigma_1}$};
  \node [func,name=f2] at ($(-1,0)$) {$\tcb{\sigma_1}$};
  \node [func,name=fk] at ($(2,0)$) {$\tcb{\sigma_1}$};
 \node [func,name=fn] at ($(1,0)$) {$\tcb{\sigma_1}$};
    \node [func,name=g1] at ($(0,-4)$) {$\tcb{\sigma_{d}}$};

    \node [hidden,name=h1] at ($(-2,-1)$){};
  \node [hidden,name=h2] at ($(-1,-1)$){};
  \node [hidden,name=hn] at ($(1,-1)$){};
  \node [hidden,name=hk] at ($(2,-1)$){};
  \node [] at ($(-2.7,-1)$) {$\tcdkg{h_1}$};
      \node [hidden,name=h21] at ($(-2,-2.5)$) {}; 
  \node [hidden,name=h22] at ($(-1,-2.5)$){};
  \node [hidden,name=h2k] at ($(2,-2.5)$){};
  \node [hidden,name=h2n] at ($(1,-2.5)$){};
  \node [] at ($(-2.7,-2.5)$) {$\tcdkg{h_{d-1}}$};
  \node[observed,name=y1] at ($(-1.5,-5)$){$y_1$};
  \node [observed,name=yk] at ($(-.5,-5)$) {$y_2$};
  \node [observed,name=y2] at ($(1.5,-5)$) {$y_{n_y}$};
 \node [observed,name=x11] at ($(-2.5,1.5)$) {$x_1$}; 
   \node [observed,name=x1] at ($(-1.5,1.5)$) {$x_2$}; 
  \node [observed,name=x2] at ($(-0.5,1.5)$) {$x_3$}; 
  \node [observed,name=xd2] at ($(2.5,1.5)$) {$x_{n_x}$}; 
  \node [] at ($(-3.2,1.5)$) {$x$};
  \node [] at ($(-2.2,-5)$) {$y$};
  \node [] at ($(-2.25,-3.5)$) {\tcr{$A_d$}};
   \node [] at ($(-3.2,0.75)$) {\tcr{$A_1$}};
  \node at ($(0,-1)$) {$\dotsb$};
    \node at ($(0.5,-5)$) {$\dotsb$};
  \node at ($(0,0)$) {$\dotsb$};
   \node at ($(1,1.5)$) {$\dotsb$};
    \node at ($(0,-2.5)$) {$\dotsb$};
    \path (h1) -- (h21) node [black, midway, sloped] {$\dots$};
     \path (h2) -- (h22) node [black, midway, sloped] {$\dots$};
      \path (hk) -- (h2k) node [black, midway, sloped] {$\dots$};
       \path (hn) -- (h2n) node [black, midway, sloped] {$\dots$};
  \draw [red, line width=.5mm, ->] (h21) to (g1);
  \draw [red, line width=.5mm, ->] (h22) to (g1);
  \draw [red, line width=.5mm, ->] (h2k) to (g1);
   \draw [red, line width=.5mm, ->] (h2n) to (g1);
    \draw [red, line width=.5mm, <-] (f1) to (x1);
  \draw [red, line width=.5mm, <-] (f1) to (x11);
   \draw [red, line width=.5mm, <-] (f2) to (x11);
  \draw [red, line width=.5mm,<-] (f2) to (x2);
  \draw [red, line width=.5mm,<-] (fk) to (xd2);
  \draw [red, line width=.5mm, <-] (fk) to (x1);
  \draw [red, line width=.5mm, <-] (fk) to (x2);
  \draw [red, line width=.5mm,<-] (fn) to (xd2);
    \draw [blue, line width=.5mm, <-] (h1) to (f1);
      \draw [blue, line width=.5mm, , <-] (h2) to (f2);
        \draw [blue, line width=.5mm, <-] (hk) to (fk);
        \draw [blue, line width=.5mm, <-] (hn) to (fn);
              \draw [blue, line width=.5mm, <-] (y1) to (g1);
      \draw [blue, line width=.5mm, <-] (y2) to (g1);
        \draw [blue, line width=.5mm, <-] (yk) to (g1);

\end{tikzpicture}
\end{center}
\caption{\small Graphical representation of Feedforward model $\Ebb[h|x]=\sigma_1(A_1x)$, $\Ebb[y|h]=\sigma_2(A_2 h )$.} \label{fig:FF}
\end{figure}
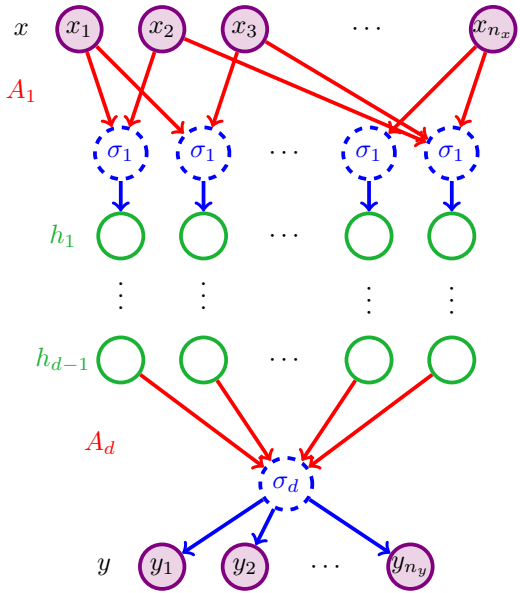

\subsection{Method-of-moments: label-score function  correlation matrix}
We hope to get information about the weight matrix using moments of the label and the input. The question is when this is possible and with what guarantees.
To study the moments let us start from a simple problem. For a linear network and whitened Gaussian input $x\sim \mathcal{N}(0,I)$, we have $y_{
\text{linear}}=Ax$. In order to learn $A$, we can form the label-score function   correlation matrix as
\begin{align*}
\Ebb[y_{\text{linear}}~x^\top]=A \Ebb[xx^\top]=A.
\end{align*}
Therefore, if $A$ is low dimensional, we can project $x$  into that span and perform classification in this lower dimension.

Stein's lemma for a Gaussian random vector $x$~\citep{stein1986approximate} states that for a function $g(\cdot)$ satisfying some mild regularity conditions we have
\begin{align*}
\Ebb[g(x) x^\top]=\Ebb_x[\nabla_x g(x)].
\end{align*}

A more difficult problem is generalized linear model (GLM) of a (whitened) Gaussian $x \in \mathbb{R}^{n_x}$. In this case, $y=\sigma(Ax)$ for any nonlinear activation function $\sigma(\cdot)$ that satisfies some mild regularity conditions. Using Stein's lemma we have
\begin{align*}
\Ebb[\sigma(Ax) x^\top]=\Ebb_{x'}[\nabla_{x'} \sigma(x'))]A,
\end{align*}where $x'\sim \Nc(0, AA^\top)$. 
Therefore, assuming $\Ebb_x[\nabla_x \sigma(Ax))]$ has full column rank, we obtain the row span of $A$. For Gaussian (and elliptical) random vector $x$, $P_{A^\top}x$ provides the sufficient statistic with no information loss. Thus, we can project the input into this span and obtain dimensionality reduction.

The Gaussian distribution assumption is a restrictive assumption. The more challenging problem is when random vector $x$ has a general probability distribution and the network has hidden layers. How can we deal with such an instance?  Below we provide the method to learn such problems.



\subsubsection{Results}
Let $x$ be a random vector with probability density function $p(x)$ and let $y$ be the output label corresponding to the network described in Equation~\eqref{eq:nn1}.
For a general probability distribution, we use score function of the random vector $x$ which provides us with sufficient statistics for $x$.
\paragraph{Definition: Score function} The score of $x$ with probability density function $p(x)$ is the random vector $\nabla_x \log p(x)$.


Let
\begin{align*}
M :=\Ebb[y \left(\nabla_x \log p(x)\right)^\top],
\end{align*}
which can be calculated in a supervised setting. Note that $\nabla_x \log p(x)$ represents the score function for random vector $x$.

\begin{theorem}
In a nonlinear neural network with feature vector $x$ and output label $y$, we have
\begin{align*}
M=-\Ebb_x[\sigma'_2(\tilde{x}_2) A_2 \D(\sigma'_{1}(\tilde{x}_1))] A_1,
\end{align*}
where $\tilde{x}_2=A_2 \sigma_1(A_1x)$ and $\tilde{x}_1=A_1x$.
\end{theorem}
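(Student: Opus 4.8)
The plan is to reduce $M$ to the negative expected Jacobian of the network's regression function $x\mapsto \Ebb[y\mid x]$ and then expand that Jacobian by the chain rule. First I would invoke the tower property: since the score $\nabla_x\log p(x)$ is a deterministic function of $x$, we have $M=\Ebb_x\!\big[\Ebb[y\mid x]\,(\nabla_x\log p(x))^\top\big]$. The conditional-mean structure of~\eqref{eq:nn1} then gives $\Ebb[y\mid x]=\sigma_2(A_2\sigma_1(A_1x))=:g(x)$ (combining $\Ebb[y\mid h]=\sigma_2(A_2h)$ with $h=\sigma_1(A_1x)$; if $h$ is not deterministic given $x$ this composition is the modeling assumption implicit in the figure). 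So it suffices to evaluate $\Ebb_x[g(x)(\nabla_x\log p(x))^\top]$ for this particular $g$.

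The second step is the score-function integration-by-parts identity — the general, non-Gaussian form of Stein's lemma referred to in the text. For a differentiable vector field $g$ with sufficient decay,
\begin{align*}
\Ebb\big[g(x)(\nabla_x\log p(x))^\top\big]=\int g(x)\,\big(\nabla_x p(x)\big)^\top\,dx=-\int\big(\nabla_x g(x)\big)\,p(x)\,dx=-\Ebb_x[\nabla_x g(x)],
\end{align*}
where $\nabla_x g(x)$ is the $n_y\times n_x$ Jacobian and the boundary terms vanish under the regularity hypotheses on $p$. Hence $M=-\Ebb_x[\nabla_x g(x)]$.

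The third step is a routine chain-rule computation of $\nabla_x g(x)$ for $g=\sigma_2\circ(A_2\,\cdot\,)\circ\sigma_1\circ(A_1\,\cdot\,)$. Writing $\tilde x_1=A_1x$ and $\tilde x_2=A_2\sigma_1(\tilde x_1)$, differentiating layer by layer yields $\nabla_x g(x)=\sigma_2'(\tilde x_2)\,A_2\,\D(\sigma_1'(\tilde x_1))\,A_1$, where $\sigma_2'(\tilde x_2)$ denotes the Jacobian of $\sigma_2$ at $\tilde x_2$ (the softmax Jacobian in the multiclass case, $\D(\sigma_2'(\tilde x_2))$ in the elementwise multilabel case) and $\D(\sigma_1'(\tilde x_1))$ is diagonal because $\sigma_1$ acts coordinatewise. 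Taking expectations and pulling the constant matrix $A_1$ outside gives $M=-\Ebb_x[\sigma_2'(\tilde x_2)\,A_2\,\D(\sigma_1'(\tilde x_1))]\,A_1$, as claimed.

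The main obstacle is not the algebra but justifying the two expectation manipulations: (i) that the conditional means in~\eqref{eq:nn1} genuinely compose to $\Ebb[y\mid x]=\sigma_2(A_2\sigma_1(A_1x))$ — immediate when $h$ is a deterministic transform of $x$, otherwise to be read as a hypothesis; and (ii) that the integration-by-parts has vanishing boundary contributions, i.e. $g(x)p(x)\to 0$ at infinity (or on the boundary of $\supp p$) with $g,p$ differentiable and the relevant derivatives integrable — this is precisely where the ``well-behaved'' assumption on $p$ and the ``mild regularity conditions'' on $\sigma_1,\sigma_2$ enter. Everything downstream of these two facts is the chain rule.
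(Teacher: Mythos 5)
Your proposal follows essentially the same route as the paper's proof: the law of total expectation to reduce $M$ to $\Ebb_x[\Ebb[y\mid x](\nabla_x\log p(x))^\top]$, the integration-by-parts form of Stein's lemma (Proposition~\ref{steinslemma}) to convert this to $-\Ebb_x[\nabla_x g(x)]$, and the chain rule to expand the Jacobian into $\sigma_2'(\tilde x_2)A_2\D(\sigma_1'(\tilde x_1))A_1$. Your explicit flagging of the composition step $\Ebb[y\mid x]=\sigma_2(A_2\sigma_1(A_1x))$ as a modeling hypothesis (rather than a consequence of~\eqref{eq:nn1} when $h$ is stochastic given $x$) is a point the paper passes over silently, but it does not change the argument.
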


\begin{proof}
Our method builds upon Stein's lemma~\cite{stein1986approximate}.
We use Proposition~\ref{steinslemma}.

\begin{align*}
M&=\Ebb_{x,y}[y \left(\nabla_x \log p(x)\right)^\top]=\Ebb_x\left[\Ebb_y\left[y \left(\nabla_x \log p(x)\right)^\top|x\right]\right] \\
&=\Ebb_x[\sigma_2(A_2(\sigma_{1}(A_1 x)\left(\nabla_x \log p(x)\right)^\top]  \\
&=-\Ebb_x[\sigma'_2(\tilde{x}_2) A_2 \D(\sigma'_{1}(\tilde{x}_1)) A_1]
\end{align*}
The second equality is a result of law of total expectation. The third equality follows from Stein's lemma as in Proposition~\ref{steinslemma} below. The last equality results from Chain rule. 
\end{proof}


\begin{proposition}[Stein's lemma~\citep{stein2004use}] \label{steinslemma}
Let $x \in \mathbb{R}^{n_x}$ be a random vector with joint density function $p(x)$. Suppose the score function $\nabla_x \log p(x)$ exists. Consider any continuously differentiable function $g(x):\mathbb{R}^{n_x} \rightarrow  \mathbb{R}^{n_y}$ such that all the entries of $g(x) p(x)^\top$ go to zero on the boundaries of support of $p(x)$. Then, we have
\begin{equation*}
\Ebb[g(x) \left(\nabla_x \log p(x)\right)^\top]=-\Ebb[\nabla_x g(x)],
\end{equation*}
Note that it is also assumed that the above expectations exist (in the sense that the corresponding integrals exist).
\end{proposition}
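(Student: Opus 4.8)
The plan is to reduce the matrix identity to a family of scalar identities, one per entry, and to establish each of them by an ordinary one-dimensional integration by parts. First I would rewrite the left-hand side using the elementary identity $\nabla_x \log p(x) = \nabla_x p(x)/p(x)$, valid wherever $p(x) > 0$, so that the $p(x)$ arising from the expectation cancels the denominator:
\[
\Ebb\bigl[g(x)\,(\nabla_x \log p(x))^\top\bigr] = \int_{\supp p} g(x)\,(\nabla_x p(x))^\top \, dx .
\]
Since $g(x)$ is an $n_y$-vector and $(\nabla_x p(x))^\top$ is a row $n_x$-vector, the $(i,j)$ entry of this matrix is simply $\int g_i(x)\,\partial_{x_j} p(x)\, dx$, and it suffices to evaluate this scalar integral.

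Next I would fix a pair $(i,j)$, apply Fubini's theorem to write the integral as an iterated integral with the $x_j$-integration performed first, and integrate by parts in the single variable $x_j$ on each slice $\{x : x_k = c_k \text{ for } k \neq j\}$:
\[
\int g_i(x)\,\partial_{x_j} p(x)\, dx_j = \bigl[\, g_i(x)\, p(x) \,\bigr]_{\partial} - \int \partial_{x_j} g_i(x)\, p(x)\, dx_j ,
\]
where $[\,\cdot\,]_{\partial}$ denotes evaluation at the endpoints of the corresponding slice of $\supp p$. The hypothesis that every entry of $g(x) p(x)^\top$ tends to zero on the boundary of $\supp p$ is exactly what is needed to kill this boundary term. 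Re-assembling via Fubini gives $\int g_i(x)\,\partial_{x_j} p(x)\, dx = -\,\Ebb[\partial_{x_j} g_i(x)]$, and collecting entries into a matrix yields $\int g(x)\,(\nabla_x p(x))^\top dx = -\,\Ebb[\nabla_x g(x)]$, where $\nabla_x g(x)$ is the $n_y \times n_x$ Jacobian with entries $\partial_{x_j} g_i(x)$. This is the claimed identity.

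The main obstacle is not the computation but its rigorous justification: one needs $g(x) p(x)$ and $\nabla_x g(x)\, p(x)$ to be absolutely integrable so that both sides are well defined and Fubini applies — this is precisely the existence-of-integrals caveat appended to the statement — together with differentiability of $p$ on the interior of its support (part of the ``well-behaved density'' assumption on $x$). Some care is also required with the geometry of $\supp p$ when it is not all of $\mathbb{R}^{n_x}$: the one-dimensional slices can be unions of intervals, so the decay hypothesis must be read as annihilating the contribution at each endpoint of each such interval, including at $\pm\infty$ when the support is unbounded. Once these regularity conditions are secured, the two displays above constitute the entire proof; no tensor machinery and no probabilistic structure beyond the density itself is used.
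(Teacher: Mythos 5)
Your argument is correct and is precisely the route the paper takes: the paper's proof consists of the single remark that the identity ``follows integration by parts'' (citing the scalar case in Stein et al.\ 2004), and your entrywise reduction via $\nabla_x \log p = \nabla_x p / p$, Fubini, and one-dimensional integration by parts with the boundary-decay hypothesis killing the boundary term is exactly that argument, spelled out with the needed integrability and support caveats.
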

The proof follows integration by parts; the result for the scalar $x$ and scalar-output functions $g(x)$ is provided in~\citep{stein2004use}.

\begin{remark}[Connection with pre-training]
The above theorem provides us with a nice closed-form. If $B=\Ebb_x[\sigma'_2(\tilde{x}_2) A_2 \D(\sigma'_{1}(\tilde{x}_1))]$ has full column rank, we obtain the row space of $A_1$. In deep networks auto-encoder is shown to approximately learn the score function of the input~\cite{alain2012regularized}. It has been shown that pre-training results in better performance. Here, we are using the correlation matrix between labels and score function to obtain the span of weights.  Auto-encoder appears to be doing the same by estimating the score function. Therefore, our method  provides a theoretical explanation of why pre-training is helpful.
\end{remark}


\begin{remark} For whitened Gaussian (and elliptical) random vector,  projecting the input onto rowspace of $M$ is a sufficient statistic. Empirically, even for non-Gaussian distribution, this has lead to improvements~\citep{sun2013learning,li1992principal}. The moment method presented in this paper presents a low-rank approximation to train the neural networks. \end{remark}

So far, we showed that we can recover the span of $A_1$. How can we retrieve the matrix $A_1$? Without further assumptions this problem is not identifiable. A reasonable assumption is that $A_1$ is sparse.
In this case, we can pose this problem as learning $A_1$ given its row span. This problem arises in a number of settings such as learning a sparse dictionary or topic modeling. Next, using the idea presented in~\citep{Spielman-12}, we discuss how this can be done.


\section{Learning the Weight Matrix } \label{sec:learnA1}
In this Section, we explain  how we learn the weight matrix $A_1$ given the moment $M$. The complete framework is shown in Algorithm~\ref{algo:main}.
Assuming sparsity we use~\citet{Spielman-12} method. 

\begin{algorithm}[t]
\caption{Learning the weight matrix for the first layer of a Neural Network}
\label{algo:main}
\begin{algorithmic}[1]
\renewcommand{\algorithmicrequire}{\textbf{input}}
\renewcommand{\algorithmicensure}{\textbf{output}}
\REQUIRE Labeled samples $\{(x_i,y_i)\}, i \in [n]$.

\STATE Estimate  Score function $\nabla_x \log p(x) $ using auto-encoder or score matching.
\STATE Compute $\widehat{M}= \frac{1}{n} \sum_{i \in [n]} y_i \left(\nabla_x \log p(x) \right|_{x=x_i})^\top$ 
\STATE  $\hat{A}_1$=Sparse Dictionary Learning$(\widehat{M})$ 

\ENSURE  $\hat{A}_1$

\end{algorithmic}
\end{algorithm}

\paragraph{Identifiablity}  The first natural identifiability requirement on $A_1$ is  that it has full row rank. \citet{Spielman-12} show that for Bernoulli-Gaussian entries under relative scaling of parameters, we can impose that the sparsest vectors in the row-span of $M$ are the rows of $A_1$. Any vector in this space is generated by a linear combination $w^\top M$ of rows of $M$. The intuition is random sparsity, where a combination of different sparse rows cannot make a sparse row. Under this identifiability condition, we need to solve the optimization problem
\begin{align*}
\text{minimize}~~ \Vert w^\top M \Vert_0  ~~\text{subject to} ~~   w \neq 0.
\end{align*}
\paragraph{$\ell_1$ optimization} In order to come up with a tractable update,~\citet{Spielman-12} use the convex relaxation of $\ell_0$ norm and relax the nonzero constraint on $w$ by constraining it to lie in an affine hyperplane $\lbrace r^\top w=1 \rbrace$. Therefore, the algorithm includes solving the following linear programming problem
\begin{align*}
\text{minimize} ~~\Vert w^\top M \Vert_1  ~~\text{subject to}~~    r^\top w=1.
\end{align*}
It is proved that under some additional conditions, when $r$ is chosen as a column or sum of two columns of $M$, the linear program is likely to produce rows of $A_1$ with high probability~\citep{Spielman-12}. We explain these conditions in our context in Section~\ref{sec:Guarantees}.

By normalizing the rows of the output, we obtain a row-normalized version of $A_1$.
The algorithm is shown in Algorithm~\ref{algA1}. Note that $e_j$ refers to the $j$-th basis vector.

\begin{algorithm}[t]
\caption{Sparse Dictionary Learning~\citep{Spielman-12}.}
\label{algA1}
\begin{algorithmic}
\renewcommand{\algorithmicrequire}{\textbf{input}}
\renewcommand{\algorithmicensure}{\textbf{output}}
\REQUIRE $\widehat{M}$
\FOR {each $j=1,\dots,n_x$}
\STATE Solve $\min_w \|w^\top \widehat{M} \|_1$ subject to $(\widehat{M} e_j)^\top w=1$, and set $s_j=w^\top \widehat{M}$.
\ENDFOR
%
\STATE $\mathcal{S} =\lbrace s_1,\dotsc,s_{n_x}    \rbrace$
\FOR {each $i=1,\dots,k$}
\REPEAT
\STATE $l \leftarrow \argmin_{s_l \in \mathcal{S}} \|s_l\|_0$, breaking ties arbitrarily.
\STATE $v_i = s_l$.
\STATE $\mathcal{S} = \mathcal{S} \setminus \{s_l\}$.
\UNTIL{$\rank ([v_1,\dotsc,v_i])=i$}
\ENDFOR
\ENSURE Set $\hat{A}_1 = [\frac{v_1}{\Vert v_1 \Vert},\dotsc,\frac{v_k}{\Vert v_k \Vert}]^\top$. 
\end{algorithmic}
\end{algorithm}

We finally note that there exist more sophisticated analysis and algorithms for the problem of finding the sparsest vectors in a subspace. \citet{AnandkumarEtal:DAG12} provide the deterministic sparsity version of the result. \citet{barak2012hypercontractivity} require more computation and even quasi-polynomial time but they can solve the problem in denser settings.

\subsection{Guarantees for learning first layer weights} \label{sec:Guarantees}

We have the following assumptions to ensure that the weight matrix $A_1 \in \mathbb{R}^{k \times n_x} $ is learnt correctly.

 \paragraph{Assumptions}
 \begin{enumerate}
 \item[A.1]  \textbf{Elementwise first layer:} $\sigma_1$ is a elementwise function.

 \item[A.2] \textbf{Nondegeneracy:} $\Ebb_x[\sigma'_2(A_2 \sigma_1(A_1 x)) A_2 \D(\sigma'_{1}(A_1x))] $ has full column rank\footnote{Throughout this Section, we use the notation $\sigma'_{1}(A_1x)$ to denote $\sigma'_{1}(\tilde{x})\vert_{\tilde{x}=Ax}$.}.

\item[A.3] \textbf{Score function:} The score function $\nabla_x \log p(x)$ exists.

 \item[A.4]  \textbf{Sufficient input dimension:}  We have $n_x >c_1 k \log^4 k$ for some positive constant $c_1$.
\item[A.5] \textbf{Sparse connectivity:} The weight matrix $A_1 $ is Bernoulli$(\theta)$-Gaussian. For some positive constant $\alpha$, we have
 $\frac{2}{k} \leq \theta \leq \frac{\alpha}{\sqrt{k}}.$ 
\item[A.6] \textbf{Normalized weight matrix:} The weight matrix $A_1 $ is row-normalized.
 \end{enumerate}

\noindent Assumption A.1 is common in deep network literature since there are only elementwise activation in the intermediate layers.

Assumption A.2 is satisfied where $A_2$ is full-rank and $\sigma'_2(A_2 \sigma_1(A_1 x)), \D(\sigma'_{1}(Ax))$ are non-degenerate. This is the case when the number of classes is large, i.e. $n_y \geq k$ as in imagenets. In future, we plan to consider the setting with a small number of classes using other methods like tensor methods.
For non-degeneracy assumption of $\sigma'_2(\cdot)$, the reason is that we assume the functions are at least linear, i.e. their first order derivatives are nonzero. This is true for the activation function models in deep networks such as sigmoid function, piecewise linear rectifier and softmax function at the last layer.

Note that Assumption A.4 uses an improvement over Spielman's initial result~\citep{luh2015dictionary}. In a deep network $k$ is usually a few thousand while $n_x$ is in the millions. Hence, Assumption A.4 is satisfied. Note that~\citet{luh2015dictionary} have provided an algorithm for very sparse weight matrices, which only needs  $n_x >c_1 k \log k$.

Assumption A.5 requires the weight matrix to be sparse and the expected number of nonzero elements in each column of $A_1$ be at most $\mathcal{O}(\sqrt{k})$~\citep{luh2015dictionary}. In other words, each input is connected to at most $\mathcal{O}(\sqrt{k})$ neurons. This is a  meaningful assumption in the deep-nets literature as it has been argued that sparse connectivity is a natural constraint which can lead to improved performance in practice~\citep{thom2013sparse}.

If Assumption A.6 does not hold, we will have to learn the scaling and the bias through back propagation. Nevertheless, since the row-normalized $\hat{A}_1$ provides the directions, the number of parameters in back propagation is  reduced significantly.   Therefore, instead of learning a dense matrix we will only need to find the scaling in a sparse matrix. This results in significant shrinkage in the number of  parameters  the back propagation needs to learn. 

Finally we provide the results on learning the first layer weight matrix in a feedforward network with one hidden layer.

\begin{theorem}
Let Assumptions $A.1-A.5$ hold for the nonlinear neural network~\eqref{eq:nn1}, then Algorithm~\ref{algA1} uniquely recovers a row-normalized version of $A_1$ with exponentially small probability of failure.
\end{theorem}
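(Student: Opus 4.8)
The plan is to chain Theorem~1 with the exact recovery guarantee for sparse dictionary learning of \citet{Spielman-12}, in the sharpened form of \citet{luh2015dictionary}. First I would use Theorem~1 to write $M = B\,A_1$ with $B := -\Ebb_x[\sigma'_2(\tilde{x}_2)\,A_2\,\D(\sigma'_{1}(\tilde{x}_1))] \in \R^{n_y\times k}$ (here A.1 and A.3 are what make this identity valid). By Assumption~A.2, $B$ has full column rank $k$; and since $A_1$ is Bernoulli$(\theta)$--Gaussian with $n_x$ large (A.4--A.5), its $k$ rows are almost surely linearly independent, so $\rank M = k$ and the row span of $M$ equals the row span of $A_1$. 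Every vector of this row span is $w^\top M = (B^\top w)^\top A_1 = u^\top A_1$ for $u := B^\top w$, and the affine constraint $(\widehat M e_j)^\top w = 1$ of Algorithm~\ref{algA1} becomes $u^\top(A_1 e_j) = 1$. Because $B$ has full column rank, $w\mapsto u=B^\top w$ is a bijection on the relevant feasible sets, so the per-column linear program of Algorithm~\ref{algA1} is, in the variable $u$, exactly the ER-SpUD linear program of \citet{Spielman-12} applied to the sparse matrix $A_1$.

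Next I would invoke the dictionary-learning guarantee. Under A.5 the entries of $A_1$ are i.i.d.\ Bernoulli$(\theta)$--Gaussian with $2/k\le\theta\le\alpha/\sqrt k$, and under A.4 we have $n_x>c_1 k\log^4 k$; these are precisely the hypotheses of \citet{luh2015dictionary} (which improves the $n_x \gtrsim k^2\,\polylog(k)$ requirement of \citet{Spielman-12}). Hence, except on an event of exponentially small probability, for every $j\in[n_x]$ the minimizer of $\min_u \|u^\top A_1\|_1$ subject to $u^\top(A_1 e_j)=1$ is a nonzero scalar multiple of a row of $A_1$ whose support contains coordinate $j$, and as $j$ ranges over $[n_x]$ all $k$ rows of $A_1$ appear (each coordinate lies in the support of some row, and distinct rows are obtained). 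Pulling back through $u = B^\top w$, the vectors $s_j = w^\top M = u^\top A_1$ computed in the first loop of Algorithm~\ref{algA1} are exactly these scaled rows of $A_1$.

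Finally I would handle the deduplication. The second loop of Algorithm~\ref{algA1} repeatedly takes the sparsest remaining $s_j$ and retains it only if it raises the rank of the accumulated set, stopping once $k$ linearly independent vectors $v_1,\dots,v_k$ are collected. Since the $s_j$ are the (scaled) rows of $A_1$ and these $k$ rows are linearly independent by the first step, the loop keeps exactly one representative of each row. The output $\hat A_1 = [v_1/\|v_1\|,\dots,v_k/\|v_k\|]^\top$ is then the row-normalized version of $A_1$, uniquely determined up to a permutation of rows and a per-row sign, on the same event of exponentially small failure probability; this is the claimed conclusion.

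The main obstacle is the second step: it rests entirely on the delicate probabilistic analysis of the ER-SpUD linear program --- that in the regime A.4--A.5 its optimum is a scaled dictionary row with only exponentially small chance of failure, and that after deduplication every row is recovered exactly once --- which I would quote from \citet{Spielman-12} and \citet{luh2015dictionary} rather than reprove. The only genuinely new ingredient over that body of work is the first step: the use of Stein's lemma (Proposition~\ref{steinslemma}) and Theorem~1, together with the non-degeneracy Assumption~A.2, to show that the label--score cross-moment matrix $M$ has the same row span as the sparse weight matrix $A_1$. A minor point worth verifying carefully is that the change of variables $u = B^\top w$ really does identify the two linear programs, normalization constraint included; this uses full column rank of $B$ and the fact that the optimizing directions lie in the $k$-dimensional space spanned by the rows of $B$.
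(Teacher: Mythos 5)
Your proposal is correct and follows essentially the same route as the paper: the paper's own proof consists of the factorization $M=BA_1$ from Theorem~1 together with Assumption~A.2, followed by a direct appeal to the exact-recovery guarantee of \citet{Spielman-12} (with the sample-size improvement of \citet{luh2015dictionary}) for the Bernoulli--Gaussian regime of A.4--A.5. Your write-up merely makes explicit the change of variables $u=B^\top w$ identifying the two linear programs and the deduplication step, which the paper leaves implicit.
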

For proof, see~\citep{Spielman-12}.

\begin{remark}[Efficient implementation] The $\ell_1$ optimization is an efficient algorithm to implement. The algorithm involves solving $k$ optimization problems. Traditionally, the $\ell_1$ minimization can be formulated as a linear programming problem. In particular, each of these $\ell_1$ minimization problems can be written as a LP with $2(k-1)$ inequality constraints and one equality constraint. Since the computational complexity of such a method is often too high for large scale problems, one can use approximate methods such as gradient projection~\citep{figueiredo2007gradient,kim2007interior}, iterative-shrinkage thresholding~\citep{daubechies2004iterative} and proximal gradient~\citep{nesterov1983method,nesterov2007gradient} that are noticeably faster~\citep{AnandkumarEtal:DAG12}.
\end{remark}

 \begin{remark}[Learning $\hat{A}_2$] After learning $A_1$, we can encode the first layer as $h=\sigma_1(A_1 x)$ and perform softmax regression to learn $A_2$.
 \end{remark}


\begin{remark}[Extension to deterministic sparsity]
The results in this work are proposed in the random setting where the i.i.d.\ Bernoulli-Gaussian entries for matrix $A_1$ are assumed. In general, the results can be presented in terms of deterministic conditions as in~\citep{AnandkumarEtal:DAG12}. \citet{AnandkumarEtal:DAG12} show that the model $M=BA_1$ is identifiable when $B$ has full column rank and the following expansion condition holds~\citep{AnandkumarEtal:DAG12}.
\begin{align*} 
|N_B(S)| \geq |S| + d_{\max}(B), \quad \forall S \subseteq \textnormal{Columns of $B$}, \ |S| \geq 2.
\end{align*}
Here, $N_B(S) := \{i \in [k]: B_{ij} \neq 0 \textnormal{\ for some \ } j \in S \}$ denotes the set of neighbors of columns of $B$ in set $S$. They also show that under additional conditions, the $\ell_1$ relaxation can recover the model parameters. See~\citep{AnandkumarEtal:DAG12} for the details.
\end{remark}

\subsection{Extension to deep networks}
So far, we have considered a network with one hidden layer. Now,
consider a deep $k$-node neural network with depth $d$. Let $y$ be the label vector and $x$ be the feature vector. We have
\begin{align}
\label{eq:nn2}
\Ebb[y|x]=\sigma_d(A_d \sigma_{d-1}(A_{d-1} \sigma_{d-2}(\cdots A_2 \sigma_1(A_1 x)))),
\end{align}
 where  $\sigma_1$ is elementwise function (linear or nonlinear). This set up is applicable to both multiclass and mutlilabel settings. For multiclass classification, $\sigma_d$ is the softmax function and for multilabel classification $\sigma_d$ is a elementwise  sigmoid function.
 In this network,  we can learn the first layer using the idea presented earlier in this Section to learn the first layer. From Stein's lemma, we have
 \begin{align*}
 M&=\Ebb[y \left(\nabla_x \log p(x)\right)^\top]=-\Ebb_x[\nabla_x y]\\&=\Ebb[\sigma'_d(\tilde{x}_d) A_d \sigma'_{d-1}(\tilde{x}_{d-1}) A_{d-1} \sigma'_{d-2}(\tilde{x}_{d-2}) A_{d-2} \cdots \sigma'_2(\tilde{x}_2) A_2 \D(\sigma'_1(\tilde{x}_1) )]A_1.
 \end{align*}

\textbf{Assumption B.2 Nondegeneracy:}

The matrix $B=\Ebb[\sigma'_d(\tilde{x}_d) A_d \sigma'_{d-1}(\tilde{x}_{d-1}) A_{d-1} \sigma'_{d-2}(\tilde{x}_{d-2}) A_{d-2} \cdots \sigma'_2(\tilde{x}_2) A_2 \D(\sigma'_1(\tilde{x}_1) )]$ has full column rank.

In Assumption B.2, $\tilde{x}_i=A_i h_i, i \in [d]$ where $h_i$ denotes the input and the $i$-th layer.

\begin{theorem}
Let Assumptions $A.1, B.2, A.3-A.6$ hold for the nonlinear deep neural network~\eqref{eq:nn2}. Then, Algorithm~\ref{algA1} uniquely recovers a row-normalized version of $A_1$ with exponentially small probability of failure.
\end{theorem}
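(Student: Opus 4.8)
The plan is to reduce the deep-network statement to the one--hidden--layer case (for which recovery of a row-normalized $A_1$ is already established via~\citet{Spielman-12}) by showing that the deep architecture~\eqref{eq:nn2} produces a moment matrix of exactly the same algebraic form, $M=-BA_1$, with $B$ of full column rank. The one genuinely new computation is the layer-by-layer chain rule; everything downstream is a black-box invocation of results already cited.

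\textbf{Step 1 (factorization of $M$).} Exactly as in the proof of the one--hidden--layer theorem, condition on $x$ and use the tower property to write $M=\Ebb_x\!\big[\Ebb[y\mid x]\,(\nabla_x\log p(x))^\top\big]$; Assumption~A.3 makes the score well defined and, the activations/weights satisfying the boundary and integrability hypotheses, Proposition~\ref{steinslemma} converts this into $M=-\Ebb_x\!\big[\nabla_x\,\Ebb[y\mid x]\big]$. Writing $h_0=x$, $\tilde x_i=A_ih_{i-1}$, $h_i=\sigma_i(\tilde x_i)$ and differentiating the composition in~\eqref{eq:nn2} layer by layer gives
\begin{align*}
M=-\Ebb_x\!\big[\sigma_d'(\tilde x_d)A_d\,\sigma_{d-1}'(\tilde x_{d-1})A_{d-1}\cdots\sigma_2'(\tilde x_2)A_2\,\D(\sigma_1'(\tilde x_1))\big]A_1=-BA_1,
\end{align*}
where the innermost Jacobian collapses to the diagonal matrix $\D(\sigma_1'(\tilde x_1))$ precisely because $\sigma_1$ is elementwise (Assumption~A.1), $B$ is the matrix named in Assumption~B.2, and $A_1$ pulls out of the expectation because it alone does not depend on $x$.

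\textbf{Step 2 (row span and reduction to a square mixing matrix).} By Assumption~B.2, $B\in\R^{n_y\times k}$ has full column rank $k$, so $B^\top$ is onto $\R^k$ and the row span of $M$ equals the row span of $A_1$; hence $\{\,w^\top M : w\,\}=\operatorname{rowspan}(A_1)$. Since $A_1$ is Bernoulli$(\theta)$--Gaussian with $n_x>c_1k\log^4 k$ (A.4), it has full row rank with overwhelming probability, so $\operatorname{rowspan}(A_1)$ is $k$-dimensional and admits a representation $B'A_1$ with $B'$ square and invertible. Consequently the input handed to Algorithm~\ref{algA1} is algebraically indistinguishable from the sparsely-used-dictionary instance treated in~\citet{Spielman-12}: find the sparsest vectors in a known $k$-dimensional subspace that is the row space of a random sparse matrix.

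\textbf{Step 3 (invoke the dictionary-learning guarantee, and the main obstacle).} The recovery analysis of~\citet{Spielman-12}, in the sharpened form of~\citet{luh2015dictionary}, depends on the data only through the subspace $\operatorname{rowspan}(A_1)$ and the Bernoulli--Gaussian law of $A_1$; under A.4 and A.5 ($2/k\le\theta\le\alpha/\sqrt k$), with exponentially small failure probability the sparsest vectors in that subspace are exactly the rescaled rows of $A_1$, each $\ell_1$ program in Algorithm~\ref{algA1} (affine constraint taken from a column, or sum of two columns, of $M$) returns one of them, the greedy rank-filling loop extracts $k$ of them spanning the row space, and the final normalization (A.6) yields a row-normalized copy of $A_1$. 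Chaining Steps~1--3 proves the theorem. The only subtlety requiring care is in Step~2: here $B$ is tall and full-column-rank rather than square, so one must explicitly pass to a square invertible mixing matrix by restricting to a basis of $\operatorname{rowspan}(A_1)$ before quoting~\citet{Spielman-12}; this is precisely the ``$A_1$ has full row rank'' identifiability requirement, which holds with overwhelming probability under A.4--A.5.
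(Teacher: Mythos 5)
Your proposal follows exactly the route the paper takes: its entire proof is the one-line remark that the result ``follows Stein's lemma, use of Chain rule and \citet{Spielman-12},'' together with the displayed factorization $M=\Ebb[\sigma'_d(\tilde{x}_d)A_d\cdots\sigma'_2(\tilde{x}_2)A_2\D(\sigma'_1(\tilde{x}_1))]A_1$ preceding the theorem, which is precisely your Step 1, and then Assumption B.2 plus the Spielman et al.\ (and Luh--Vu) guarantee, which is your Steps 2--3. Your write-up is simply a more careful spelling-out of the same argument (including the passage from the tall full-column-rank $B$ to the row span of $A_1$), so it is correct and essentially identical in approach.
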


The proof follows Stein's lemma, use of Chain rule and~\citep{Spielman-12}.

In a deep network, the first layer includes most of the parameters (if a structure such as convolutional  networks is not assumed) and other layers consist of a small number of parameters since there are small number of neurons. Therefore, the above result is a prominent progress in learning deep neural networks.

\begin{remark}[] This is the first result to learn  a subset of  deep networks for general nonlinear case in supervised manner.
The idea presented in~\citep{arora2013provable} is for the auto-encoder setting, whereas we consider supervised setting. Also,~\citet{arora2013provable}  assume that the hidden layer can be decoded correctly using a ``Hebbian'' style rule, and they all have only binary states. In addition, they can handle sparsity level up to $k^\gamma, 0 < \gamma \leq 0.2$ while we can go up to $\sqrt{k}$, i.e. $\gamma=0.5$.
\end{remark}

\begin{remark}[Challenges in learning the higher layers]
In order for $B$ to have  full column rank, intermediate layers should have square weight matrices. However, if we want to learn the middle layers, $A.4$ requires that the number of rows of the weight matrices be smaller than the number of columns in a specific manner and therefore $B$ cannot have full column rank.  In future, we hope to investigate new methods to help in overcoming this challenge.
\end{remark}




\section{Conclusion}
We introduced a new paradigm for learning neural networks using method-of-moments. In the literature, this method has been restricted to unsupervised setting. Here, we  bridged the gap and employed it for discriminative learning. 
This opens up a lot of interesting research directions for future investigation.
First, note  that we only considered the input to have continuous distribution for which the score function exists. The question is whether learning the parameters in a neural network is possible for the discrete data. Although Stein's lemma has a form for discrete variables (in terms of finite differences)~\citep{wei2010stein}, it is not clear how that can be leveraged to learn the network parameters. Next, it is worth analyzing  how we can go beyond $\ell_1$ relaxation and provide guarantees in such cases. Another interesting problem arises in case of small number of classes. Note that for non-degeneracy condition, we require the number of classes to be bigger than the number of neurons in the hidden layers. Therefore, our method does not work for the cases where $n_y < k$. In addition, in order to learn the weight matrices for intermediate layers,   we need the number of rows to be smaller than the number of columns to have sufficient input dimension. On the other hand, non-degeneracy assumption requires these weight matrices to be square matrices. Hence, learning the weights in the intermediate layers of deep networks is a challenging problem. It seems tensor methods, which have been highly successful in learning a wide range of hidden models such as topic modeling, mixture of Gaussian and community detection problem~\citep{AnandkumarEtal:tensor12}, may provide a way to overcome the last two challenges. 



\subsection*{Acknowledgment}
A. Anandkumar
is supported in part by Microsoft Faculty Fellowship, NSF Career award CCF-$1254106$, NSF Award CCF-$1219234$, ARO YIP Award W$911$NF-$13$-$1$-$0084$ and ONR Award N$00014-14-1-0665$. H. Sedghi is supported by ONR Award N$00014-14-1-0665$.

\end{document}